\newtheorem{lemma}{Lemma}
\newtheorem{theorem}{Theorem}
\newtheorem{definition}{Definition}
\newtheorem{example}{Example}
\newcommand{\opt}{\mathrm{opt}}
\newcommand{\non}{\mathrm{non}}
\begin{document}

\title{Pure Strategy or Mixed Strategy? }
\author{Jun He,
\thanks{Jun He   is with Department of Computer Science, Aberystwyth University, Ceredigion, SY23 3DB, UK. Email:  jun.he@aber.ac.uk.}
  \and Feidun He, 
  \thanks{Feidun He is with School of Information Science and Technology, Southwest Jiaotong University, Chengdu, Sichuan, 610031, China}
    \and Hongbin Dong 
    \thanks{Hongbin Dong is with College of Computer Science and Technology, Harbin Engineering University, Harbin, 150001, China}
    }

\maketitle

\begin{abstract}
Mixed strategy  evolutionary algorithms (EAs) aim at integrating several mutation operators into a single algorithm.
However no analysis has been made to answer the theoretical question: whether  and when is the performance of mixed strategy EAs  better than that of pure strategy EAs?  In this paper,  asymptotic convergence rate and asymptotic hitting time  are proposed to measure the performance of EAs.
It is proven  that  the asymptotic convergence rate and asymptotic hitting time of any  mixed strategy (1+1) EA  consisting of several mutation operators   is not worse  than  that of the worst pure strategy (1+1) EA using only one mutation operator. Furthermore it is proven that if these mutation operators are mutually complementary, then it is possible to design a mixed strategy (1+1) EA whose performance  is better than that of any pure strategy (1+1) EA  using only one  mutation operator.

\end{abstract}

\section{Introduction}
Different search operators have been proposed and applied in  EAs~\cite{fogel1997handbook}. Each search operator has its own advantage.   Therefore an interesting research issue is  to combine the advantages of variant operators together and then design more efficient hybrid EAs. Currently hybridization of  evolutionary algorithms becomes popular due to their capabilities in handling some real world problems~\cite{grosan2007hybrid}.

Mixed strategy EAs, inspired from strategies and games~\cite{dutta1999strategies}, aims at integrating several mutation operators into a single algorithm~\cite{he2005game}. At each generation, an individual will choose one mutation operator  according to a strategy probability distribution.  Mixed strategy evolutionary programming has been implemented for continuous optimization and experimental results show   it   performs better than its rival, i.e., pure strategy evolutionary programming  which utilizes a single mutation operator~\cite{dong2007evolutionary,liang2010mixed}.

However no analysis has been made to answer the  theoretical question:  whether and when is  the performance of mixed strategy EAs  better than that of pure strategy EAs? This paper aims at providing an initial answer. In theory, many of EAs can be regarded as a matrix iteration procedure. Following  matrix iteration analysis~\cite{varga2009matrix},  the performance of EAs is measured by  the asymptotic convergence rate, i.e., the spectral radius of a probability transition sub-matrix associated with an EA. Alternatively the performance of EAs can be measured by  the asymptotic hitting time~\cite{he2011population}, which  approximatively equals the reciprocal of the asymptotic convergence rate. Then a theoretical analysis is made to compare the performance of mixed strategy and pure strategy EAs .

The rest of this paper is organized as follows.
Section 2 describes pure strategy and mixed strategy EAs. Section
3 defines asymptotic convergence rate and asymptotic hitting time. Section 4 makes a  comparison of pure strategy and mixed strategy EAs. Section 5 concludes the paper.

\section{Pure Strategy  and Mixed Strategy EAs}
Before starting a theoretical analysis of mixed strategy EAs, we first demonstrate the result of a computational experiment.

\begin{example}
Let's see an instance of the average capacity 0-1 knapsack problem~\cite{michalewicz1996genetic,he2007comparison}:  
\begin{equation}
\begin{array}{llll}
&\mbox{maximize }  \sum^{10}_{i=1}v_i b_i,  & b_i\in \{ 0,1\}, \\
&\mbox{subject to } \sum^{10}_{i=1} w_i b_i \le C,  
\end{array}
\end{equation}
 where 
  $v_1=10$ and $v_i=1$ for $i=2, \cdots, 10$;   $w_1=9$
and $w_i=1$ for $i=2, \cdots, 10$; $C=9$.

The fitness function is that for $x=(b_1, \cdots, b_{10})$
$$
f(x)=\left \{ 
\begin{array}{llll}
&\sum^{10}_{i=1}v_i b_i, &\mbox{if } \sum^{10}_{i=1} w_i b_i \le C,  \\
& 0, &\mbox{if } \sum^{10}_{i=1} w_i b_i > C.  
\end{array}
\right.
$$

We consider two types of mutation operators:
\begin{itemize}
\item  s1:   flip each bit $b_i$ with a probability $0.1$;
\item  s2:   flip each bit $b_i$ with a probability $0.9$;
\end{itemize}

The selection operator is to accept a better offspring only.

Three (1+1) EAs are compared in the computation experiment: (1) EA(s1) which adopts   s1 only, (2)    EA(s2) with   s2 only, and (3)   EA(s1,s2) which chooses either s1 or s2  with a probability $0.5$ at each generation.

Each of these three EAs runs  100 times independently. The  computational experiment shows that EA(s1, s2)  always finds the optimal solution more quickly than other twos.
\end{example}

This is a simple case study that shows  a mixed strategy EA performs better than a pure strategy EA. In general, we need to answer the following theoretical question: whether or when do a mixed strategy EAs are better than pure strategy EAs?

Consider an instance of the discrete optimization problem which is to maximize  an objective function $f(x)$:
\begin{equation}\label{equ1}
  \max \{ f(x); x \in  S \},
\end{equation}
where $S$ a finite set. 
For the analysis convenience, suppose that all constraints have been removed through an appropriate penalty function method. Under this scenario,  all points in $S$ are viewed as feasible solutions.  In evolutionary computation, $f(x)$ is called a \emph{fitness function}.

The following notation is used in the algorithm and text thereafter.
\begin{itemize}
  \item  $x,y,z \in S$  are called \emph{points} in $S$, or  \emph{individuals} in EAs or \emph{states} in Markov chains.

   \item The  \emph{optimal set} $S_{\opt}\subseteq S$ is the set consisting of all   optimal solutions to Problem (\ref{equ1}) and    \emph{non-optimal set} $S_{\non} := S \setminus S_{\opt}$.

  \item   $t$ is the  generation counter.  A random variable $\Phi_t$  
represents the state of the $t$-th generation parent; $\Phi_{t+1/2}$ the  state of the child which is generated through mutation.
\end{itemize}

The mutation and selection operators are defined as follows:
\begin{itemize}
\item A  \emph{mutation  operator} is a probability transition from $S$ to $S$. It is defined by a \emph{mutation probability transition matrix} $\mathbf{P}_m$ whose entries are given by
\begin{equation}
 P_m(x,y), \quad x, y \in S.
\end{equation}

\item  A \emph{strict  elitist selection operator} is a mapping from $S \times S$ to $S$, that is for $x \in S$ and $y \in S$,
\begin{equation}
    z=\left\{
    \begin{array}{llll}
    x, &\mbox{if } f(y) \le f(x),\\
    y, &\mbox{if } f(y) > f(x).
    \end{array}
    \right.
\end{equation}
\end{itemize}

A \emph{pure strategy}  (1+1) EA,   which utilizes only one  mutation operator,   is described in Algorithm~\ref{alg1}.
\begin{algorithm}[ht]
\caption{Pure Strategy Evolutionary Algorithm  EA(s) } \label{alg1}
\begin{algorithmic}[1]
\STATE \textbf{input}: fitness function;
\STATE generation counter $ t\leftarrow 0$;
\STATE initialize $ \Phi_0$;
\WHILE{stopping criterion is not satisfied}
\STATE $\Phi_{t+1/2}\leftarrow$ mutate $\Phi_t$ by   mutation operator  s;
\STATE evaluate the fitness of  $\Phi_{t+1/2}$;
\STATE $\Phi_{t+1}\leftarrow$ select  one individual from $\{ \Phi_t, \Phi_{t+1/2}\}$ by strict elitist selection;
\STATE $t\leftarrow t+1$;
\ENDWHILE \STATE \textbf{output}:  the maximal value of the fitness function.
\end{algorithmic}
\end{algorithm}

The stopping criterion is that the running  stops once an optimal solution is found. If an EA cannot find an optimal solution, then it will not stop and the running time is infinite. This is common in the theoretical analysis of EAs.

Let s1, ..., s$\kappa$ be $\kappa$ mutation operators  (called  \emph{strategies}).
Algorithm~\ref{alg2} describes the procedure of a \emph{mixed strategy} (1+1) EA. At the $t$-th generation,
 one mutation operator is chosen from the $\kappa$ strategies   according to a \emph{strategy probability distribution}
\begin{equation}
 q_{s1}(x), \cdots, q_{s\kappa}(x),
\end{equation}
subject to $0\le q_s(x) \le 1$ and $\sum_s  q_{s}(x)=1$.

Write this probability distribution  in short by a vector $\mathbf{q}(x)=[q_s(x)]$.
\begin{algorithm}[ht]
\caption{Mixed Strategy Evolutionary Algorithm  EA(s1, ..., s$\kappa$)} \label{alg2}
\begin{algorithmic}[1]
\STATE \textbf{input}: fitness function;
\STATE generation counter $ t\leftarrow 0$;
\STATE initialize $ \Phi_0$;
\WHILE{stopping criterion is not satisfied}
\STATE choose a mutation operator  sk  from  s1, ..., s$\kappa$;
\STATE $\Phi_{t+1/2}\leftarrow$ mutate  $\Phi_t$ by mutation operator $sk$;
\STATE evaluate $\Phi_{t+1/2}$;
\STATE $\Phi_{t+1}\leftarrow$ select one individual from $\{\Phi_t, \Phi_{t+1/2}\}$ by  strict elitist selection;
\STATE $t\leftarrow t+1$;
\ENDWHILE \STATE \textbf{output}:  the maximal value of the fitness function.
\end{algorithmic}
\end{algorithm}

Pure strategy EAs can be regarded a special case of mixed strategy EAs with only one strategy.

EAs can be classified into two types:
\begin{itemize}
\item A \emph{homogeneous EA}  is an EA which applies the same mutation operators  
and same strategy probability distribution for all generations.

\item An \emph{inhomogeneous} EA is an EA  which doesn't apply the same mutation operators  
or same strategy probability distribution for all generations.
\end{itemize}

This paper will only discuss \emph{homogeneous EAs} mainly due to the following  reason:
\begin{itemize}
  \item The probability transition matrices of an inhomogeneous EA   may be chosen to be totally different at different generations. This makes the theoretical analysis of an inhomogeneous EA extremely hard.
\end{itemize}

\section{Asymptotic Convergence Rate and Asymptotic Hitting Time}
Suppose that a homogeneous EA is applied to maximize a fitness function $f(x)$, then the population sequence $\{\Phi_t, t=0, 1,\cdots\}$  can be modelled by a  \emph{homogeneous Markov chain} \cite{rudolph1994convergence,he1999convergence}.  Let $\mathbf{P}$ be the probability  transition matrix, whose   entries are given by
$$P(x,y)=P(\Phi_{t+1}= y \mid  \Phi_t = x), \quad x , y \in S.$$

Starting from an initial state $x$, the mean number  $m(x)$ of  generations to  find an optimal solution is called  the  \emph{hitting time} to the set $S_{\opt}$ \cite{he2003towards}.
$$\begin{array}{llll}
\tau(x)&:=&\min \{t;  \Phi_t  \in S_{\mathrm{opt}} \mid \Phi_0= x\},\\
 m (x)&:=& E[\tau(x)]=\displaystyle \sum^{+\infty}_{t=0} t P(\tau(x)=t).
\end{array}
$$

Let's arrange  all individuals in the order of  their  fitness from high to low: $ x_1, x_2, \cdots   $, then their hitting times are:
$$  m(x_1), m(x_2), \cdots   .$$
Denote it in short by a vector $\mathbf{m}=[m (x)]$.

Write the transition matrix  $\mathbf{P}$ in the canonical form  \cite{iosifescu1980finite},
\begin{equation}
\label{equ2} \mathbf{P} =
\begin{pmatrix}
\mathbf{I} & \mathbf{  0} \\
  *  & \mathbf{T}
\end{pmatrix},
\end{equation}
where $\mathbf{I}$ is a unit matrix and $\mathbf{{  0}}$ a zero matrix. $ \mathbf{T}$ denotes  the  probability transition sub-matrix among non-optimal  states, whose entries are given by
$$
 P(x,y), \quad x \in S_{\non}, y \in S_{\non}.
$$
The part $*$ plays no role in the analysis.

Since $\forall x \in S_{\opt}, m (x)=0$, it is sufficient to consider $ {m}(x)$ on non-optimal states $x \in S_{\non}$.
For the simplicity of notation,  the vector ${\mathbf{m}}$ will also denote the hitting  times for all non-optimal states:
$
 [m (x)], x \in S_{\non}.
 $

The Markov chain associated with an EA can be viewed as a matrix iterative procedure, where the iterative matrix is the probability transition sub-matrix $\mathbf{T}$.
Let $\mathbf{p}_0$ be the vector $[p_0(x)]$ which represents the probability distribution of the initial individual:
$$
p_0(x): =P(\Phi_0=x), \quad x \in S_{\non},
$$
and
$\mathbf{p}_t$  the vector $[p_t(x)]$ which represents the probability distribution of the $t$-generation individual:
$$
p_t(x): =P(\Phi_t=x), \quad x \in S_{\non}.
$$

If the spectral radius $\rho(\mathbf{T})$ of the matrix $\mathbf{T}$ satisfies: $\rho(\mathbf{T})< 1$, then we know~\cite{varga2009matrix}
$$
\lim_{t \to \infty} \parallel \mathbf{p}_{t}  \parallel  =0.
$$

Following matrix iterative analysis~\cite{varga2009matrix}, the asymptotic convergence rate of an EA is  defined as below.
\begin{definition}
The  \emph{asymptotic convergence rate} of   an EA for maximizing $f(x)$ is
\begin{equation}
R(\mathbf{T}):=-\ln \rho(\mathbf{T})
\end{equation}
where $\mathbf{T}$ is the probability transition sub-matrix restricted to non-optimal states and  $\rho(\mathbf{T})$  its spectral radius.
\end{definition}

Asymptotic convergence rate is different from previous definitions of convergence rate based on  matrix norms or probability distribution \cite{he1999convergence}.

Note:  Asymptotic convergence rate depends on both the probability transition sub-matrix $\mathbf{T}$ and  fitness function $f(x)$. Because the spectral radius of the probability transition matrix 
$\rho(\mathbf{P})=1$, thus $\rho(\mathbf{P})$ cannot be used to measure the performance of EAs.  Becaue the mutation probability transition matrix is the same for all functions $f(x)$, and   $\rho(\mathbf{P}_m)=1$, so  $\rho(\mathbf{P}_m)$ cannot be used to measure the performance of EAs too.

If $\rho(\mathbf{T})<1$, then the hitting time vector satisfies (see Theorem 3.2 in  \cite{iosifescu1980finite}),
\begin{equation}\label{equ3}
\mathbf{m}=(\mathbf{I}- \mathbf{T})^{-1} \mathbf{1}.
\end{equation}

The matrix
 $\mathbf{N}:=(\mathbf{I}-\mathbf{T})^{-1}
$ is called the \emph{fundamental matrix} of the  Markov chain, where  $\mathbf{T}$ is the probability transition sub-matrix restricted to non-optimal states.

The spectral radius $\rho(\mathbf{N})$ of  the fundamental matrix  can be used to measure   the performance  of EAs too.

\begin{definition}
The \emph{asymptotic hitting time} of an EA for maximizing $f(x)$ is
$$
T(\mathbf{T})=
\left\{ \begin{array}{lll}
\rho(\mathbf{N})=\rho((\mathbf{I}-\mathbf{T})^{-1}), &\mbox{if } \rho(\mathbf{T})<1,\\
+\infty, &\mbox{if } \rho(\mathbf{T})=1.
\end{array} 
\right. 
$$
where $\mathbf{T}$ is the probability transition sub-matrix restricted to non-optimal states and $\mathbf{N}$ is the fundamental matrix.
\end{definition}

From Lemma 5  in \cite{he2011population},, we know the asymptotic hitting time  is  between the best  and  worst case hitting times, i.e.,
\begin{equation}
   \min\{ m(x); x \in S_{\non}\}  \le   T(\mathbf{T}) \le \max \{ m(x); x \in S_{\non} \}.
\end{equation}

From Lemma 3   in \cite{he2011population}, we know
\begin{lemma} \label{lem1}
For any homogeneous (1+1)-EA using strictly elitist selection, it holds
$$
\begin{array}{llll}
&\rho(\mathbf{T})= \max\{ P(x,x); x \in S_{\non}\},\\
    &\rho(\mathbf{N})=\displaystyle \frac{1}{1-\rho(\mathbf{T})}, &\mbox{if }\rho(\mathbf{T}) <1.
\end{array}
$$
\end{lemma}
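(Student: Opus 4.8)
The plan is to use the strictly elitist selection to put $\mathbf{T}$ into triangular form, after which both $\rho(\mathbf{T})$ and $\rho(\mathbf{N})$ can be read directly off the diagonal.

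First I would record the structural consequence of strict elitist selection: since the parent is replaced only by a child of strictly larger fitness, the fitness values $f(\Phi_0), f(\Phi_1), \dots$ are non-decreasing along every trajectory. Hence for any two distinct non-optimal states $x,y$ with $P(x,y)>0$ we must have $f(y)>f(x)$; the only other possibility from $x$ is to stay at $x$. Now arrange the non-optimal states as $x_1,x_2,\dots$ so that $f(x_1)\ge f(x_2)\ge\cdots$ — exactly the ordering the paper already uses for the hitting-time vector. With the rows and columns of $\mathbf{T}$ indexed in this order, $P(x_i,x_j)>0$ forces $j=i$ or $f(x_j)>f(x_i)$, hence $j\le i$; that is, $\mathbf{T}$ is lower triangular, and its diagonal entries are precisely the self-loop probabilities $P(x_i,x_i)$. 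Ties in fitness cause no trouble: two distinct states of equal fitness cannot reach one another, so each equal-fitness block of $\mathbf{T}$ is itself diagonal.

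A triangular matrix has its diagonal entries as its eigenvalues, so the spectrum of $\mathbf{T}$ is $\{P(x_i,x_i)\}$; since these are probabilities they are non-negative, and therefore $\rho(\mathbf{T})=\max_i P(x_i,x_i)=\max\{P(x,x); x\in S_{\non}\}$. For the second identity, assume $\rho(\mathbf{T})<1$, i.e. $P(x_i,x_i)<1$ for all $i$. Then $\mathbf{I}-\mathbf{T}$ is lower triangular with non-zero diagonal entries $1-P(x_i,x_i)$, hence invertible, and $\mathbf{N}=(\mathbf{I}-\mathbf{T})^{-1}$ is again lower triangular with diagonal entries $1/(1-P(x_i,x_i))$. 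These are the eigenvalues of $\mathbf{N}$; all are positive, and $t\mapsto 1/(1-t)$ is increasing on $[0,1)$, so $\rho(\mathbf{N})=\max_i 1/(1-P(x_i,x_i))=1/(1-\max_i P(x_i,x_i))=1/(1-\rho(\mathbf{T}))$.

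I expect the only delicate point to be the justification that ordering by fitness genuinely makes $\mathbf{T}$ triangular — in particular handling states of equal fitness and making explicit that ``triangular'' here is block-triangular with diagonal blocks that are themselves diagonal — together with the elementary but essential remark that the spectral radius of a non-negative (block-)triangular matrix equals its largest diagonal entry. Everything else is bookkeeping once the triangular structure is in hand.
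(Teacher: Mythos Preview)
Your proof is correct. The paper itself does not prove this lemma; it merely cites Lemma~3 of \cite{he2011population} and states the result. Your triangularization argument --- ordering the non-optimal states by decreasing fitness so that strict elitist selection forces $\mathbf{T}$ to be lower triangular, and then reading both $\rho(\mathbf{T})$ and $\rho(\mathbf{N})$ directly off the diagonal --- is the natural self-contained proof and is almost certainly what the cited reference contains. Your handling of fitness ties (distinct states of equal fitness cannot reach one another under strict elitism, so each equal-fitness block of $\mathbf{T}$ is diagonal) is correct and disposes of the only genuinely subtle point.
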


From Lemma~\ref{lem1} and Taylor series, we get that 
$$ 
R(\mathbf{T}) T(\mathbf{T}) = 
\sum^{\infty}_{k=1} \frac{1}{k } \left( \frac{1}{T(\mathbf{T})}\right)^{k-1}.
$$ 

If we make a mild assumption   $ T(\mathbf{T}) \ge 2,$ (i.e., the asymptotic hitting time  is at least two generations), then the asymptotic hitting time approximatively equals the reciprocal of the asymptotic convergence rate (see Figure~1).

\begin{figure}
\begin{center}
\begin{tikzpicture}[line cap=round,line join=round,>=triangle 45,x=5cm,y=1cm]
\draw[->,color=black] (-0.3,0) -- (1.1,0);
\foreach \x in {-0.2,0.2,0.4,0.6,0.8,1}
\draw[shift={(\x,0)},color=black] (0pt,2pt) -- (0pt,-2pt) node[below] {\footnotesize $\x$};
\draw[color=black] (0.95,0.03) node [anchor=south west] { $\rho(\mathbf{T})$};
\draw[->,color=black] (0,-0.2) -- (0,2.9);
\foreach \y in {,0.5,1,1.5,2,2.5}
\draw[shift={(0,\y)},color=black] (2pt,0pt) -- (-2pt,0pt) node[left] {\footnotesize $\y$};
\draw[color=black] (0.01,2.75) node [anchor=west] { $R (\mathbf{T})\times T(\mathbf{T}) $};
\draw[color=black] (0pt,-10pt) node[right] {\footnotesize $0$};
\clip(-0.3,-0.2) rectangle (1.1,2.9);
\draw[smooth,samples=100,domain=0.1:0.99] plot(\x,{(-ln((\x)))/(1-(\x))});
\end{tikzpicture}
\end{center}
\label{fig1}
\caption{The relationship between the asymptotic hitting time and asymptotic convergence rate: $1/R(\mathbf{T}) < T(\mathbf{T}) <1.5/R(\mathbf{T})$ if   $\rho(\mathbf{T})\ge 0.5$.}
\end{figure}
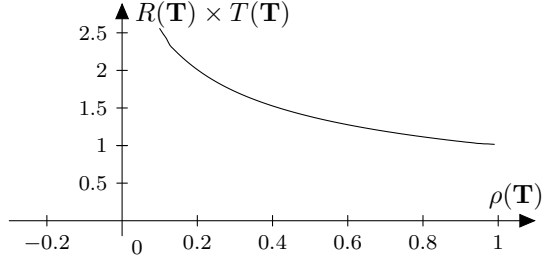

\begin{example}\label{exa2}
Consider the problem of maximizing the One-Max function: 
$$
 f(x)=  \mid x \mid,
$$
where
$x=(b_1 \cdots b_n)$ a binary string, $n$  the string length and $\mid x \mid := \sum^n_{i=1} b_i$.
The mutation operator used in the (1+1) EA is to
   choose one bit randomly and then  flip it.

Then asymptotic convergence rate and asymptotic hitting time are
$$
\begin{array}{ccc}
 {1}/{n} < R(\mathbf{T}) < {1}/{(n-1)},  \\
 T(\mathbf{T})=n.
\end{array}
$$
\end{example}

\section{A Comparison of Pure Strategy and Mixed Strategy}
In this section,  subscripts ${\mathbf{q}}$ and s are added to distinguish between a mixed strategy EA using a strategy   probability distribution $\mathbf{q}$  and a pure strategy EA using a pure strategy  s. For example, $\mathbf{T}_{\mathbf{q}}$ denotes the probability transition sub-matrix  of  a mixed strategy EA; $\mathbf{T}_{s}$ the transition sub-matrix   of a pure strategy EA.
 
\begin{theorem}
\label{the2}
Let $s1, \cdots s\kappa$ be $\kappa$ mutation operators.
\begin{enumerate}
  \item The asymptotic convergence rate  of any mixed strategy EA  consisting of   these $\kappa$ mutation operators is not smaller than the worst pure strategy  EA using only one of these  mutation operator;
   \item   and the asymptotic hitting time of any mixed strategy EA is not larger than the worst pure strategy  EA using one only of these mutation operator.
\end{enumerate}
\end{theorem}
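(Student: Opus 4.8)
The plan is to reduce the whole statement to Lemma~\ref{lem1}, which identifies the spectral radius of the non-optimal transition sub-matrix of any strictly elitist (1+1) EA with the largest self-loop probability $\max\{P(x,x):x\in S_{\non}\}$. So the first step is to express the self-loop probability of the mixed strategy EA through those of the pure strategy EAs. Under strict elitist selection the chain stays at $x$ precisely when the mutant $y$ has $f(y)\le f(x)$, so for a pure strategy $s$ one has $P_{s}(x,x)=\sum_{y:f(y)\le f(x)}P_{m,s}(x,y)$. The mixed strategy first picks operator $sk$ with probability $q_{sk}(x)$ and then applies it; summing over the same event $\{y:f(y)\le f(x)\}$ and exchanging the two finite sums gives
\begin{equation}
P_{\mathbf{q}}(x,x)=\sum_{k=1}^{\kappa} q_{sk}(x)\,P_{sk}(x,x),
\end{equation}
that is, $P_{\mathbf{q}}(x,x)$ is a convex combination of the $P_{sk}(x,x)$ and hence $P_{\mathbf{q}}(x,x)\le\max_{1\le k\le\kappa}P_{sk}(x,x)$ for every $x\in S_{\non}$.

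Second, I would take the maximum over $x\in S_{\non}$ and interchange the two maxima. By Lemma~\ref{lem1},
\begin{equation}
\rho(\mathbf{T}_{\mathbf{q}})=\max_{x\in S_{\non}}P_{\mathbf{q}}(x,x)\le\max_{x\in S_{\non}}\max_{1\le k\le\kappa}P_{sk}(x,x)=\max_{1\le k\le\kappa}\rho(\mathbf{T}_{sk}).
\end{equation}
Let $s^{*}$ be a pure strategy attaining the right-hand maximum; it is the \emph{worst} pure strategy EA, since a larger spectral radius means slower convergence and, by Lemma~\ref{lem1}, a larger asymptotic hitting time. Since $R(\mathbf{T})=-\ln\rho(\mathbf{T})$ is decreasing in $\rho(\mathbf{T})$, the bound $\rho(\mathbf{T}_{\mathbf{q}})\le\rho(\mathbf{T}_{s^{*}})$ gives $R(\mathbf{T}_{\mathbf{q}})\ge R(\mathbf{T}_{s^{*}})$, which is part~1.

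Third, for part~2 I would split on whether $\rho(\mathbf{T}_{s^{*}})<1$. If it is, then $\rho(\mathbf{T}_{\mathbf{q}})<1$ as well, and $T(\mathbf{T})=1/(1-\rho(\mathbf{T}))$ by Lemma~\ref{lem1} is increasing in $\rho(\mathbf{T})$, so $T(\mathbf{T}_{\mathbf{q}})\le T(\mathbf{T}_{s^{*}})$. If $\rho(\mathbf{T}_{s^{*}})=1$, then $T(\mathbf{T}_{s^{*}})=+\infty$ and the inequality is immediate. I do not expect any genuine obstacle here: the only point needing care is the bookkeeping in the first step, namely that the selection event ``keep $x$'' is literally the same set of mutants for every operator because it depends only on $f$, so the exchange of sums is valid; after that everything is monotonicity of $-\ln$ and of $1/(1-\cdot)$. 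Note that the mild assumption $T(\mathbf{T})\ge 2$ of Section~3 is not used; it would matter only if one wished to restate the two conclusions as a single statement by passing between $R$ and $T$.
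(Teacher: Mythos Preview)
Your proof is correct and follows essentially the same route as the paper: both reduce to Lemma~\ref{lem1}, bound the mixed self-loop probability by that of the pure strategies, and then invoke monotonicity of $-\ln(\cdot)$ and of $1/(1-\cdot)$. Your version is in fact a bit more careful than the paper's: you treat an arbitrary strategy distribution $\mathbf{q}(x)$ (the paper's displayed computation writes the uniform weights $1/\kappa$), and you explicitly handle the degenerate case $\rho(\mathbf{T}_{s^{*}})=1$ for part~2.
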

\begin{IEEEproof}
(1) From Lemma~\ref{lem1} we know
$$ 
\rho(\mathbf{T}_{\mathbf{q}}) =\max\{ \displaystyle\frac{1}{\kappa} \sum^\kappa_{k=1}  P_{s_k} (x, x); x \in S_{\non} \} \le   \displaystyle  \frac{1}{\kappa} \sum^\kappa_{k=1}    \rho(\mathbf{T}_{sk})   
 \le     \max\{  \rho(\mathbf{T}_{sk}
); k=1, \cdots, \kappa\}.
 $$
Thus we get that
$$
R(\mathbf{T}_{\mathbf{q}}) :=-\ln  \rho (\mathbf{T}_{\mathbf{q}})  \ge   \max\{-\ln \rho(\mathbf{T}_{sk}); k=1, \cdots, \kappa\}.  
$$

 (2)
 From Lemma \ref{lem1}, we know
 $$
 \rho(\mathbf{N})=\displaystyle \frac{1}{1-\rho(\mathbf{T})},
 $$ then we get
 $
\rho(\mathbf{N}_{\mathbf{q}})
\le \max \{\rho(\mathbf{N}_{s_k}); k=1, \cdots, \kappa \}.
 $
\end{IEEEproof}

In the following we investigate whether and when the performance of a  mixed strategy EA is better than a  pure strategy EA.

\begin{definition}
A mutation operator s1
  is called  \emph{complementary} to another mutation operator s2 on a fitness function $f(x)$ if
for  any $x$ such that
\begin{equation}
 P_{s1} (x, x) =\rho(\mathbf{T}_{s1}),
\end{equation}
 it holds  
\begin{equation}
 P_{s2} (x, x)< \rho(\mathbf{T}_{s1}).
\end{equation}
\end{definition}

\begin{theorem}\label{the3}
Let  $f(x)$ be a fitness function  and EA(s1) a pure strategy EA. If a  mutation operator s2 is complementary to s1, then it is possible to design a mixed strategy EA(s1,s2) which satisfies
\begin{enumerate}
  \item its asymptotic convergence rate is larger than that of EA(s1);
  \item and its asymptotic hitting time   is shorter than that of EA(s1).
\end{enumerate}
\end{theorem}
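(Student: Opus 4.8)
The plan is to reduce both claims, via Lemma~\ref{lem1}, to controlling a single scalar — the largest diagonal entry of the transition sub-matrix — and then to exhibit an explicit strategy distribution that achieves the reduction. The key identity is that for the mixed strategy $\mathbf{q}=[q_{s1}(x),q_{s2}(x)]$ a homogeneous (1+1) EA with strictly elitist selection satisfies
\[
P_{\mathbf{q}}(x,x)=q_{s1}(x)\,P_{s1}(x,x)+q_{s2}(x)\,P_{s2}(x,x),\qquad x\in S_{\non},
\]
because, conditioned on the operator chosen in Algorithm~\ref{alg2}, one step of EA(s1,s2) behaves exactly like one step of the corresponding pure-strategy EA, and selection is deterministic given parent and child. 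By Lemma~\ref{lem1}, $\rho(\mathbf{T}_{\mathbf{q}})=\max_{x\in S_{\non}}P_{\mathbf{q}}(x,x)$ and $\rho(\mathbf{T}_{s1})=\max_{x\in S_{\non}}P_{s1}(x,x)=:\rho_1$, so claim~(1) is equivalent to finding $\mathbf{q}$ with $\max_{x\in S_{\non}}P_{\mathbf{q}}(x,x)<\rho_1$, and claim~(2) will follow from the monotone relation $\rho(\mathbf{N})=1/(1-\rho(\mathbf{T}))$.

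Next I would take the one-parameter family $q_{s1}(x)\equiv 1-\varepsilon$, $q_{s2}(x)\equiv \varepsilon$ and split $S_{\non}$ into the bottleneck set $X^{*}:=\{x\in S_{\non}:P_{s1}(x,x)=\rho_1\}$ and its complement $X^{**}$. The complementarity hypothesis is precisely the statement that $P_{s2}(x,x)<\rho_1$ for every $x\in X^{*}$; hence for such $x$,
\[
P_{\mathbf{q}}(x,x)=\rho_1-\varepsilon\bigl(\rho_1-P_{s2}(x,x)\bigr)\le \rho_1-\varepsilon\gamma ,\qquad \gamma:=\min_{x\in X^{*}}\bigl(\rho_1-P_{s2}(x,x)\bigr)>0 ,
\]
while for $x\in X^{**}$, using $P_{s1}(x,x)\le \delta:=\max_{x\in X^{**}}P_{s1}(x,x)<\rho_1$ and $P_{s2}(x,x)\le 1$, one gets $P_{\mathbf{q}}(x,x)\le \delta+\varepsilon(1-\delta)$. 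Finiteness of $S$ is exactly what makes $\gamma>0$ and $\delta<\rho_1$ strict, so any $\varepsilon$ with $0<\varepsilon<(\rho_1-\delta)/(1-\delta)$ — interpreted as $\varepsilon\in(0,1)$ when $X^{**}=\emptyset$ — gives $\max_{x\in S_{\non}}P_{\mathbf{q}}(x,x)<\rho_1$, i.e.\ $\rho(\mathbf{T}_{\mathbf{q}})<\rho_1$; taking $-\ln$ yields claim~(1). For claim~(2): if $\rho_1<1$ then $T(\mathbf{T}_{\mathbf{q}})=1/(1-\rho(\mathbf{T}_{\mathbf{q}}))<1/(1-\rho_1)=T(\mathbf{T}_{s1})$, and if $\rho_1=1$ the same estimates give $\rho(\mathbf{T}_{\mathbf{q}})<1$, so $T(\mathbf{T}_{\mathbf{q}})<\infty=T(\mathbf{T}_{s1})$.

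The main obstacle is not the algebra but the choice of $\mathbf{q}$: the naive uniform mixture $\varepsilon=1/2$ need not improve anything, because on states outside $X^{*}$ the operator s2 may be strictly worse than s1, and mixing in too much of it can push some diagonal entry above $\rho_1$ and slow the algorithm down. The fix — using s2 only in a small, uniform dose $\varepsilon$ — works simultaneously at every non-optimal state only because $S$ is finite, which converts the pointwise strict inequalities supplied by complementarity and by the definition of $X^{*}$ into the uniform margins $\gamma$ and $\rho_1-\delta$. I would therefore present the convex-combination identity for $P_{\mathbf{q}}(x,x)$ and the finiteness argument most carefully, as that is where the theorem really lives.
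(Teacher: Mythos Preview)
Your argument is correct, but the construction differs from the paper's. The paper exploits that the strategy distribution may depend on the current state and takes the simplest possible choice: set $q_{s2}(x)=1$ for every $x$ with $P_{s1}(x,x)=\rho(\mathbf{T}_{s1})$ and $q_{s1}(x)=1$ otherwise. Then on bottleneck states $P_{\mathbf{q}}(x,x)=P_{s2}(x,x)<\rho_1$ by complementarity, and off them $P_{\mathbf{q}}(x,x)=P_{s1}(x,x)<\rho_1$ by definition of the bottleneck set; finiteness of $S$ finishes as in your proof. No $\varepsilon$, no $\gamma$, no $\delta$ is needed. Your route instead produces a \emph{state-independent} mixture $q_{s2}\equiv\varepsilon$, at the price of the extra uniform-margin bookkeeping you carry out; what this buys is a mixed strategy that does not require knowing the bottleneck set $X^{*}$ and is genuinely randomized at every state, which is arguably closer to how mixed strategies are deployed in practice. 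Both approaches reduce the claims to Lemma~\ref{lem1} in the same way, so the only real difference is in the design of $\mathbf{q}$; your remark that $\varepsilon=1/2$ can fail is exactly the issue the paper's state-dependent switch sidesteps.
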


\begin{IEEEproof}
(1) Design a mixed strategy  EA(s1,  s2)  as follows. For  any $x$ such that
 $$
 P_{s1} (x, x) =\rho(\mathbf{T}_{s1}),
 $$
 let the strategy probability distribution satisfy
   $$
   q_{s2}(x)=1.
   $$

For any other $x$, let the strategy probability distribution satisfy
   $$
   q_{s1}(x)=1.
   $$

Because s2 is complementary to s1, we get that
\[
\rho(\mathbf{T}_{\mathbf{q}}) <  \rho(\mathbf{T}_{s1}) ,
\]
and then
\[
 -\ln \rho(\mathbf{T}_{\mathbf{q}}) >  - \ln \rho(\mathbf{T}_{s1}) ,
\]
which proves the first conclusion in the theorem.

(2) From Lemma~\ref{lem1}
 $$
\rho(\mathbf{N}) = \frac{1}{1-\rho{(\mathbf{T}})}
$$
we get that
$$
\rho(\mathbf{N}_{\mathbf{q}})  < \rho(\mathbf{N}_{sk}), \quad \forall k=1, \cdots, \kappa,
$$
which proves the second conclusion in the theorem. 
\end{IEEEproof}

\begin{definition}
 $\kappa$ mutation operators $s1, \cdots, s\kappa$ are called \emph{mutually complementary}  on a fitness function $f(x)$ if
 for any $  x \in S_{\non}$ and $sl \in \{ s1, \cdots, s\kappa \} $ such that
\begin{equation}
  P_{sl} (x,x) \ge  \min \{\rho(\mathbf{T}_{s1}), \cdots, \rho(\mathbf{T}_{s\kappa}) \},  
\end{equation}
 it holds:   $\exists sk \neq sl$,
\begin{equation}
      P_{sk} (x,x) < \min \{\rho(\mathbf{T}_{s1}), \cdots, \rho(\mathbf{T}_{s\kappa}) \}.
\end{equation}
\end{definition}

\begin{theorem}\label{the4}
Let $f(x)$ be  a fitness function  and $s1, \cdots, s\kappa$ be $\kappa$ mutation operators. If these mutation operators are mutually complementary, then it is possible to design a mixed strategy EA which satisfies
\begin{enumerate}
  \item its asymptotic convergence rate is larger than that of any pure strategy  EA using one  mutation operator;
  \item and its asymptotic hitting time   is shorter than that of any pure strategy  EA using one  mutation operator.
\end{enumerate}
\end{theorem}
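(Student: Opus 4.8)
The plan is to follow the template of the proof of Theorem~\ref{the3}, but now to route \emph{every} non-optimal state to a single strategy chosen so that its self-loop probability stays below the common threshold $\rho^* := \min\{\rho(\mathbf{T}_{s1}),\dots,\rho(\mathbf{T}_{s\kappa})\}$.

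First I would record the shape of the diagonal of the mixed-strategy transition sub-matrix. Since selection is the same strict elitist rule, using strategy distribution $\mathbf{q}$ at a state $x\in S_{\non}$ makes the chain stay at $x$ with probability $P_{\mathbf{q}}(x,x)=\sum_{k=1}^{\kappa} q_{sk}(x)\,P_{sk}(x,x)$; in particular, if $\mathbf{q}$ puts all its mass on one strategy $sk$ at $x$, then $P_{\mathbf{q}}(x,x)=P_{sk}(x,x)$. By Lemma~\ref{lem1} this gives $\rho(\mathbf{T}_{\mathbf{q}})=\max\{P_{\mathbf{q}}(x,x);\ x\in S_{\non}\}$, so the whole problem reduces to picking, for each $x\in S_{\non}$, a strategy at which the self-loop probability is $<\rho^*$.

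Next I would use mutual complementarity to show that such a strategy always exists. Fix $x\in S_{\non}$. If some strategy $sl$ has $P_{sl}(x,x)\ge\rho^*$, the definition of mutual complementarity hands us a strategy $sk(x)\neq sl$ with $P_{sk(x)}(x,x)<\rho^*$; and if every strategy already satisfies $P_{sl}(x,x)<\rho^*$, we may take $sk(x)=s1$. (Note the two cases are exhaustive, and in fact the situation in which all strategies simultaneously satisfy $P_{sl}(x,x)\ge\rho^*$ is impossible, since applying the definition to one of them would contradict it.) I would then define the mixed strategy EA by $q_{sk(x)}(x)=1$ for each $x\in S_{\non}$ and, say, $q_{s1}(x)=1$ for $x\in S_{\opt}$; this is a valid strategy probability distribution.

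Finally I would collect the estimates. Because $S_{\non}\subseteq S$ is finite, the maximum of the finitely many numbers $P_{sk(x)}(x,x)$, each strictly below $\rho^*$, is still strictly below $\rho^*$; hence $\rho(\mathbf{T}_{\mathbf{q}})<\rho^*\le\rho(\mathbf{T}_{sk})$ for every $k$, and applying $-\ln(\cdot)$ yields conclusion~1. For conclusion~2, $\rho(\mathbf{T}_{\mathbf{q}})<\rho^*\le 1$ forces $\rho(\mathbf{T}_{\mathbf{q}})<1$, so Lemma~\ref{lem1} gives $T(\mathbf{T}_{\mathbf{q}})=\tfrac{1}{1-\rho(\mathbf{T}_{\mathbf{q}})}<\tfrac{1}{1-\rho(\mathbf{T}_{sk})}=T(\mathbf{T}_{sk})$ whenever $\rho(\mathbf{T}_{sk})<1$, while $T(\mathbf{T}_{\mathbf{q}})$ is finite and therefore smaller than $T(\mathbf{T}_{sk})=+\infty$ whenever $\rho(\mathbf{T}_{sk})=1$. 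The step I would be most careful about is the passage from the pointwise strict inequalities $P_{sk(x)}(x,x)<\rho^*$ to the single strict inequality $\rho(\mathbf{T}_{\mathbf{q}})<\rho^*$: it hinges on finiteness of $S_{\non}$ and would be false on an infinite state space. Everything else runs exactly parallel to the proofs of Theorems~\ref{the2} and~\ref{the3}.
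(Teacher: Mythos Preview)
Your proposal is correct and follows essentially the same construction as the paper: at each non-optimal state, put all the probability mass on a single mutation operator whose self-loop probability lies strictly below $\rho^*=\min_k\rho(\mathbf{T}_{sk})$, then invoke Lemma~\ref{lem1}. If anything, your write-up is more careful than the paper's, since you explicitly justify the passage from the pointwise strict inequalities to $\rho(\mathbf{T}_{\mathbf{q}})<\rho^*$ via finiteness of $S_{\non}$ and separately handle the case $\rho(\mathbf{T}_{sk})=1$ in part~(2).
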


\begin{IEEEproof}
(1) We design a mixed strategy  EA(s1, ..., s$\kappa$)  as follows.
For any  $x$ and any strategy $sl \in \{s1, \cdots, s\kappa \}$ such that
$$
  \begin{array}{llll}
  P_{sl} (x,x)\ge \min \{\rho(\mathbf{T}_{s1}), \cdots, \rho(\mathbf{T}_{s\kappa}) \},\\
  \end{array}
  $$
from the mutually complementary condition, we know $\exists sk \neq sl$, it holds
   $$
      P_{sk} (x,x) < \min \{\rho(\mathbf{T}_{s1}), \cdots, \rho(\mathbf{T}_{s\kappa}) \}.
   $$
   Let the strategy probability distribution satisfy
   $$
   q_{sk}(x)=1.
   $$

  For any other $x$,
we assign a strategy probability distribution in any way.

Because the mutation operators are mutually complementary, we get that
\[
\rho(\mathbf{T}_{\mathbf{q}}) < \min \{\rho(\mathbf{T}_{s1}), \cdots, \rho(\mathbf{T}_{s\kappa}) \},
\]
and then
\[
 -\ln \rho(\mathbf{T}_{\mathbf{q}}) > \min \{-\ln \rho(\mathbf{T}_{s1}), \cdots, -\ln \rho(\mathbf{T}_{s\kappa}) \},
\]
which proves the first conclusion in the theorem.

(2) From Lemma~\ref{lem1}
 $$
\rho(\mathbf{N}) = \frac{1}{1-\rho{(\mathbf{T}})},
$$
we get that
$$
\rho (\mathbf{N}_{\mathbf{q}})  < \rho (\mathbf{N}_{sk}), \quad \forall k=1, \cdots, \kappa,
$$
which proves the second conclusion in the theorem. 
\end{IEEEproof}

\begin{example}\label{exa3}
Consider the problem of maximizing the following fitness function $f(x)$ (see Figure~\ref{fig2}): 
$$
f(x)=\left
\{\begin{array}{lll}
\mid x \mid, & \mbox{if } \mid x \mid < 0.5 n \mbox{ and } \mid x \mid \mbox{ is even};\\
\mid x \mid+2, & \mbox{if } \mid x \mid < 0.5 n \mbox{ and } \mid x \mid \mbox{ is odd};\\
\mid x \mid, & \mbox{if } \mid x \mid \ge 0.5 n.
\end{array}
\right.
$$
where
$x=(b_1 \cdots b_n)$ is a binary string, $n$ the string length and $\mid x \mid := \sum^n_{i=1} b_i$. 
\begin{figure}[ht]
\begin{center}
\definecolor{qqqqff}{rgb}{0,0,1}
\begin{tikzpicture}[line cap=round,line join=round,>=triangle 45,x=0.3cm,y=0.175cm]
\draw[->,color=black] (-1.5,0) -- (18.5,0);
\foreach \x in {,2,4,6,8,10,12,14,16,18}
\draw[shift={(\x,0)},color=black] (0pt,2pt) -- (0pt,-2pt) node[below] {\footnotesize $\x$};
\draw[color=black] (15.12,0.2) node [anchor=south west] { $\mid x \mid$};
\draw[->,color=black] (0,-1.5) -- (0,18.5);
\foreach \y in {,5,10,15}
\draw[shift={(0,\y)},color=black] (2pt,0pt) -- (-2pt,0pt) node[left] {\footnotesize $\y$};
\draw[color=black] (0.2,17.45) node [anchor=west] { $f(x)$};
\draw[color=black] (0pt,-10pt) node[right] {\footnotesize $0$};
\clip(-1.5,-1.5) rectangle (18.5,18.5);
\draw (0,0)-- (1,3);
\draw (1,3)-- (2,2);
\draw (2,2)-- (3,5);
\draw (3,5)-- (4,4);
\draw (4,4)-- (5,7);
\draw (5,7)-- (6,6);
\draw (6,6)-- (7,9);
\draw (7,9)-- (8,8);
\draw (8,8)-- (9.01,9.11);
\draw (9.01,9.11)-- (10,10);
\draw (10,10)-- (11,11);
\draw (11,11)-- (12,12);
\draw (12,12)-- (13,13);
\draw (13,13)-- (14,14);
\draw (14,14)-- (15,15);
\draw (15,15)-- (16,16);
\begin{scriptsize}
\fill [color=black] (1,3) circle (1.5pt);
\fill [color=black] (3,5) circle (1.5pt);
\fill [color=black] (5,7) circle (1.5pt);
\fill [color=black] (7,9) circle (1.5pt);
\fill [color=black] (0,0) circle (1.5pt);
\fill [color=black] (2,2) circle (1.5pt);
\fill [color=black] (4,4) circle (1.5pt);
\fill [color=black] (6,6) circle (1.5pt);
\fill [color=black] (8,8) circle (1.5pt);
\fill [color=black] (10,10) circle (1.5pt);
\fill [color=black] (12,12) circle (1.5pt);
\fill [color=black] (11,11) circle (1.5pt);
\fill [color=black] (13,13) circle (1.5pt);
\fill [color=black] (14,14) circle (1.5pt);
\fill [color=black] (15,15) circle (1.5pt);
\fill [color=black] (16,16) circle (1.5pt);
\fill [color=black] (9.01,9.11) circle (1.5pt);
\end{scriptsize}
\end{tikzpicture}
\caption{The shape of the   function $f(x)$ in Example \ref{exa3} when  $n=16$.}
\label{fig2}
\end{center}
\end{figure}
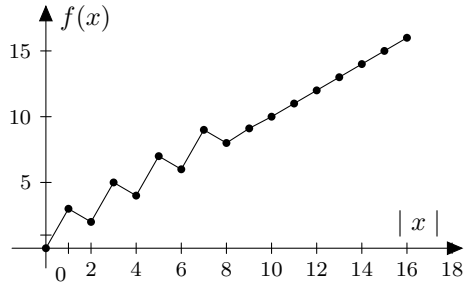

Consider two  common mutation operators:
\begin{itemize}
  \item s1: to choose one bit  randomly and then flip it;
  \item s2:  to flip each bit independently with a probability $1/n$.
\end{itemize}

EA(s1) uses the mutation operator s1 only. Then
$
\rho(\mathbf{T}_{s1})=1,
$
and then the asymptotic convergence rate is
$R(\mathbf{T}_{s1})=0.$ 

EA(s2) utilizes  the mutation operator s2 only.  Then
$$
\rho(\mathbf{T}_{s2})=1-\frac{1}{n} \left( 1- \frac{1}{n} \right)^{n-1}.
$$
We have
$$
\min \{ \rho(\mathbf{T}_{s1}),\rho(\mathbf{T}_{s2})\} = 1-\frac{1}{n} \left( 1- \frac{1}{n} \right)^{n-1}.
$$

(1) For any $x$ such that
$$
P_{s1} (x,x) \ge  1-\frac{1}{n} \left( 1- \frac{1}{n} \right)^{n-1},$$
we have 
$$
P_{s1} (x,x)  =1,$$
and we know that  
$$
P_{s2} (x,x) < 1-\frac{1}{n} \left( 1- \frac{1}{n} \right)^{n-1}.
$$

(2) For any $x$ such that
$$
P_{s2} (x,x) =\rho(\mathbf{T}_{s2})=1-\frac{1}{n} \left( 1- \frac{1}{n} \right)^{n-1},
$$
we know that 
 $$
 P_{s1} (x,x) =1-\frac{1}{n} < \rho(\mathbf{T}_{s2})=1-\frac{1}{n} \left( 1- \frac{1}{n} \right)^{n-1}.
 $$

Hence these two mutation operators are mutually complementary.

We design a mixed strategy  EA(s1,s2)  as follows:
 let the strategy probability distribution  satisfy 
  $$
  q_{s1}(x)=
  \left\{
  \begin{array}{lll}
  0, &\mbox{if } \mid x \mid \le 0.5n;\\
  1, &\mbox{if } \mid x \mid > 0.5n.
  \end{array}
  \right.
  $$

According to Theorem~\ref{the4},  the asymptotic convergence rate  of this  mixed strategy  EA(s1,s2)  is larger than that of either EA(s1) or EA(s2).
\end{example}

\section{Conclusion and Discussion}
The result  of this paper is summarized in three points.
\begin{itemize}
  \item Asymptotic convergence rate and asymptotic hitting time are proposed to measure the performance of EAs.  They are seldom used  in evaluating the performance of EAs before.

  \item It is proven  that  the asymptotic convergence rate and asymptotic hitting time of any  mixed strategy (1+1) EA  consisting of several mutation operators   is not worse  than  that of the worst pure strategy (1+1) EA using only one of these mutation operators.

  \item Furthermore, if these mutation operators are mutually complementary, then it is possible to design a mixed strategy EA whose performance (asymptotic convergence rate and asymptotic hitting time) is better than that of any pure strategy  EA using one  mutation operator.

\end{itemize}

An argument is that several mutation operators   can be applied simultaneously, e.g., in a population-based EA, different individuals  adopt different mutation operators.   However  in this case, the number of fitness evaluations  at each generation is larger than that of a (1+1) EA. Therefore  a fair comparison should be a  population-based mixed strategy EA against a population-based pure strategy EA. Due to the length restriction,   this issue will not be discussed in the paper.

\paragraph*{Acknowledgement}  
J. He is partially supported by  the EPSRC under Grant EP/I009809/1.
H. Dong is partially supported by the National Natural Science Foundation of China under Grant No.~60973075 and Natural Science Foundation of Heilongjiang Province of China under Grant No.~F200937, China.

\end{document}